\documentclass[12pt,letterpaper]{elsarticle}
\usepackage[utf8x]{inputenc}
\usepackage{units}
\usepackage{amsthm}
\usepackage{amsmath}
\usepackage{amssymb}
\usepackage{cite}

\makeatletter



\numberwithin{equation}{section}
\numberwithin{figure}{section}
  \theoremstyle{definition}
  \newtheorem{defn}{\protect\definitionname}
\theoremstyle{plain}
\newtheorem{thm}{\protect\theoremname}
\theoremstyle{plain}
\newtheorem{lmm}{\protect\lemmaname}
  \theoremstyle{plain}
  \newtheorem{cor}{\protect\corollaryname}
  \theoremstyle{remark}
  \newtheorem{rem}{\protect\remarkname}
  \theoremstyle{plain}
  \newtheorem{prop}{\protect\propositionname}
 \theoremstyle{definition}
  \newtheorem{example}{\protect\examplename}

\makeatother

  \providecommand{\definitionname}{Definition}
  \providecommand{\examplename}{Example}
  \providecommand{\propositionname}{Proposition}
  \providecommand{\remarkname}{Remark}
\providecommand{\corollaryname}{Corollary}
\providecommand{\theoremname}{Theorem}
\providecommand{\lemmaname}{Lemma}
\begin{document}

\title{Weakly monotone averaging functions}

\author{Tim Wilkin, Gleb Beliakov\\
 School of Information Technology, Deakin University, \\
 221 Burwood Hwy, Burwood 3125, Australia\\
tim.wilkin@deakin.edu.au, gleb@deakin.edu.au }
\begin{abstract}
Monotonicity with respect to all arguments is fundamental to the definition
of aggregation functions. It is also a limiting property that results
in many important non-monotonic averaging functions being excluded
from the theoretical framework. This work proposes a definition
for weakly monotonic averaging functions, studies some  properties
of this class of functions and proves that several families of important
non-monotonic means are actually weakly monotonic averaging functions.
Specifically we provide sufficient conditions for weak monotonicity
of the Lehmer mean and generalised mixture operators. We establish
weak monotonicity of several robust estimators of location and conditions
for weak monotonicity of a large class of penalty-based aggregation
functions. These results permit a proof of the weak monotonicity of
the class of spatial-tonal filters that include important members
such as the bilateral filter and anisotropic diffusion. Our concept
of weak monotonicity provides a sound theoretical and practical basis
by which (monotone) aggregation functions and non-monotone averaging functions
can be related within the same framework, allowing us to bridge the
gap between these previously disparate areas of research.
\end{abstract}
\maketitle
{Keywords: aggregation functions, monotonicity, means, penalty-based
functions, non-monotonic functions}

\newpage

\section{Introduction}

\label{sec:Introduction}

The aggregation of several input variables into a single representative
output arises naturally as a problem in many practical applications
and domains. The research effort has been disseminated throughout
various fields including economics, computer science, mathematics
and engineering, with the subsequent mathematical formulation of aggregation
problems having coalesced into a significant body of knowledge concerning
aggregation functions. A wide range of aggregation functions are presented
in the literature, including the weighted quasi-arithmetic means,
ordered weighted averages, triangular norms and co-norms, Choquet
and Sugeno integrals and many more. Several recent books provide a
comprehensive overview of this field of study (\citet{Beliakov2007_book,Grabisch2009_book, Torra2007_book}).

Aggregation functions are commonly used within fuzzy logic, where
logical connectives are typically modeled using triangular norms and
triangular co-norms. Beyond this field  the averaging functions
- more commonly known as \emph{means} - that are frequently applied
in decision problems, statistical analysis and in image and signal
processing. Means have been an important tool and topic of study for
over two millennia, with examples such as the arithmetic, geometric
and harmonic means known to the Greeks (\citet{Rubin1968}). Each
of these means shares a fundamental property with the broader class
of aggregation functions; that of monotonicity with respect to all
arguments (\citet{Beliakov2007_book,Grabisch2009_book,Torra2007_book}).
There are though many means of significant practical and theoretical
importance that are non-monotonic and hence not classified as aggregation
functions. For example, a non-monotonic average of pixel intensities
within an image subset is used to perform tasks such as image reduction
(\citet{Wilkin2013a}), filtering (\citet{Boomgaard2002,Sylvain2008})
or smoothing (\citet{Barash2004}). Within statistics, robust estimators
of location are used to estimate the central tendency of a data set
and the mode, an average possibly known to the Greeks (\citet{Rubin1971}),
is a classic example of a non-monotonic average.

Monotonicity with respect to all arguments has an important interpretation
in decision making problems: an increase in one criterion should not
lead to the decrease of the overall score or utility. However, in
image processing an increase in only one pixel value above its neighbours
may be due to noise or corruption and should not necessarily increase
the intensity value that represents that region. Accordingly, the
averaging functions used in such applications do not fit within the
established theories regarding aggregation functions and are typically
dealt with only from the signal processing perspective.

There are also many non-monotonic means appearing in the literature,
with the mode, Gini means, Lehmer means, Bajraktarevic means (\citet{Beliakov2007_book,Bullen2003_book})
and mixture functions (\citet{Ribeiro2003,Marques2003_FSS}) being
particularly well known cases. Ideally we would like a formal framework
for averaging functions that encompasses non-monotonic means and places
them in context with existing monotonic aggregation functions, enabling
us to better understand the relationships within this broad class
of functions. In so doing we are then able to broaden our understanding
of non-monotonic averaging as an aggregation problem.

We achieve this aim herein by relaxing the monotonicity requirement
for averaging aggregation functions and propose a new definition that encompasses
many non-monotonic averaging functions. We justify this approach by
the following interpretation of averaging: while we accept that an
increase in one input, or coordinate, may lead to a decrease of the
aggregate value, we argue that the same increase coincident in all
inputs should only lead to an increase of the aggregate value. This
is akin to the property of shift-invariance, which along with homogeneity
is one of the basic requirements of the non-monotonic location estimators
(\citet{Rousseeuw1987}). We do not impose shift-invariance though,
as that would severely limit the range of averaging functions that
fall under our definition averaging functions (for instance, the
only shift invariant quasi-arithmetic means are weighted arithmetic
means). Rather we consider the property of directional monotonicity
in the direction of the vector $(1,1,\ldots,1)$, which is obviously
implied by shift-invariance as well as by the standard definition
of monotonicity. We call this property \emph{weak monotonicity} within
the context of aggregation functions and we investigate it herein.

The remainder of this article is structured as follows. In Section
\ref{sec:preliminaries} we provide the necessary mathematical foundations
that underpin the subsequent material. Section \ref{sec:weak_monotonicty}
provides the main definitions and presents various properties of weakly
monotone aggregation functions. Within Section \ref{sec:wm_examples}
we examine several non-monotonic means and prove that they are, in
fact, weakly monotonic. In Section \ref{sec:conclusions} we draw
our conclusions and discuss future research directions arising as
a result of this investigation.

\section{Preliminaries}
\label{sec:preliminaries}

\subsection{Aggregation functions}

In this article we make use of the following notations and assumptions.
Without loss of generality we assume that the domain of interest is
any closed, non-empty interval $\mathbb{I}\subseteq\bar{\mathbb{R}}=[-\infty,\infty]$
and that tuples in $\mathbb{I}^{n}$ are defined as $\mathbf{x}=(x_{i,n}|n\in\mathbb{N},i\in\{1,...,n\})$.
We write $x_{i}$ as the shorthand for $x_{i,n}$ such that it is
implicit that $i\in\{1,...,n\}$. Furthermore, $\mathbb{I}^{n}$ is
ordered such that for $\mathbf{x},\mathbf{y}\in\mathbb{I}^{n}$, $\mathbf{x}\le\mathbf{y}$
implies that each component of $\mathbf{x}$ is no greater than the
corresponding component of $\mathbf{y}$.
Unless otherwise stated, a constant vector given as $\mathbf{a}$
is taken to mean $\mathbf{a}=a({\underbrace{1,1,\ldots,1}_\text{$n$ times})=a\mathbf{1}}$,
where $a \in \mathbb{R}$ is a constant and $n$ is implicit within the context of use.

The vector $\mathbf{x}_{\nearrow}$ denotes the result of permuting
the vector $\mathbf{x}$ such that its components are in non-decreasing
order, that is, $\mathbf{x}_{\nearrow}=\mathbf{x}_{\sigma}$, where
$\sigma$ is the permutation such that $x_{\sigma(1)}\le x_{\sigma(2)}\le\ldots\le x_{\sigma(n)}$.
Similarly, the vector $\mathbf{x}_{\searrow}$ denotes the result
of permuting $\mathbf{x}$ such that $x_{\sigma(1)}\ge x_{\sigma(2)}\ge\ldots\ge x_{\sigma(n)}$.
We will make use of the common shorthand notation for a sorted vector,
being $\mathbf{x_{()}}=(x_{(1)},x_{(2)},\ldots,x_{(n)})$. In such
cases the ordering will be stated explicitly and then $x_{(k)}$ represents
the $k$th largest or smallest element of $\mathbf{x}$ accordingly. 

Consider now the following definitions:
\begin{defn}
\label{def:monotonic_function}\emph{}{A function $F:\mathbb{I}^{n}\rightarrow\bar{\mathbb{R}}$
is }\textbf{monotonic}\emph{}{ (non-decreasing) if and only if, $\forall\mathbf{x},\mathbf{y}\in\mathbb{I}^{n},\mathbf{x}\le\mathbf{y}$
then $F(\mathbf{x})\le F(\mathbf{y})$.}
\end{defn}

\begin{defn}
\label{def:aggregation_function}\emph{}{A function} $F:\mathbb{I}^{n}\rightarrow\mathbb{I}$
\emph{}{is an }\textbf{aggregation function} \emph{}{in $\mathbb{I}^{n}$
if and only if $F$ is monotonic non-decreasing in $\mathbb{I}$ and
$F(\mathbf{a})=a$, $F(\mathbf{b})=b$, with $\mathbb{I}^{n}=[a,b]^{n}$.}
\end{defn}

\begin{defn}
\label{def:idempotent_function}\emph{}{A function $F$ is called }\textbf{idempotent}\emph{}{
if for every input $\mathbf{x}=(t,\, t,\,...\,,\, t),\, t\in\mathbb{I}$
the output is $F(\mathbf{x})=t$.}
\end{defn}
The functions of most interest in this article are those that have
averaging behaviour.
\begin{defn}
\textbf{\label{def:averaging_function}}\emph{}{A function $F$ has }\textbf{averaging behaviour}{ (or is averaging) if for every
$\mathbf{x}$ it is bounded by} $\min(\mathbf{x})\le F(\mathbf{x})\le\max(\mathbf{x}).$
\end{defn}
Aggregation functions that have averaging behaviour
are idempotent, whereas idempotency and monotoicity  imply averaging behaviour.
 
\begin{defn}
\label{def:internal_function}\emph{}{A function is called }\textbf{internal}{
if its value coincides with one of the arguments.}
\end{defn}

Of particular relevance is the notion of shift-invariance 
\citet{Calvo2002_book,Lazaro2004_FSS} 
(which is also called difference scale invariance \citet{Grabisch2009_book}).
 A constant change in every input should result in a corresponding change of the
output.

\begin{defn}
{\label{def:shift_invariant}A function $F:\mathbb{I}^{n}\rightarrow\mathbb{I}$
is} \textbf{shift-invariant} \emph{(stable for translations) if} $F(\mathbf{x}+a\mathbf{1})=F(\mathbf{x})+a$
whenever $\mathbf{x},\mathbf{x}+a\mathbf{1}\in\mathbb{I}^{n}$.
\end{defn}

\begin{defn}
\emph{}{\label{def:homogeneous_function}A function $F$ is} \textbf{homogeneous}
\emph{}{(with degree one) if} $F(a\mathbf{x})=aF(\mathbf{x})$ \emph{}for
all $a\mathbf{x}\in\mathbb{I}^{n}$.
\end{defn}
Aggregation functions that are shift-invariant and homogeneous are
known as \emph{linear aggregation functions}.
 The canonical example of a linear
aggregation function is the arithmetic mean.

\subsection{Means}

The term \emph{mean} is used synonymously with averaging aggregation
functions. Chisini's definition of a mean as an average states that
the mean of $n$ independent variables $(x_{1},...,x_{n})$, with
respect to a function $F$, is a value $M$ for which replacement
of each value $x_{i}$ in the input by $M$, results in the output
$M$ (\citet{Chisini1929}, stated in \citet{Grabisch2009_book}).
I.e.,

\[
F(x_{1},...,x_{n})=F(M,...,M)=M
\]

As was noted by de Finetti (\citet{deFinetti1931}, stated in \citet{Grabisch2009_book}),
Chisini's definition does not necessarily satisfy Cauchy's requirement
that a mean be an internal value (\citet{Cauchy1821}). However, by
assuming that $F$ is a non-decreasing, idempotent function, then
existence, uniqueness and internality of $M$ are restored to Chisini's
definition. Gini ( \citet{Gini1958_book}, p.64), writes that an average of several quantities is a value obtained as a result of a certain procedure, which equals to either one of the input quantities, or a new value that lies in between the smallest and the largest input.
The requirement that $F$ be non-decreasing is too strict
given the aims of this article and as such, following many authors
(e.g., \citet{Gini1958_book,Bullen2003_book}), we take the definition of a mean
to be any averaging (and hence idempotent) function.
\begin{defn}
\label{def:mean}{A function} $M:\mathbb{I}^{n}\rightarrow\mathbb{I}$
{is called a} \textbf{mean} if and only if it is averaging.
\end{defn}

The basic examples of (monotonic) means found within the
literature include
 weighted arithmetic mean,
 weighted quasi-arithmetic mean,
 ordered weighted average (OWA),
 order statistic $S_{k}(\mathbf{x})=x_{(k)}$, and
the median. Less known examples include Choquet and Sugeno integrals and their special cases;
the logarithmic means, Heronian means, Bonferroni means and others
\citet{Bullen2003_book, Grabisch2009_book}.

In continuing, we wish to consider a broader class of means to include
those that are not necessarily monotonic. A classic example is the
mode, being the most frequent input, which is routinely used in statistics. \footnote{In general the mode is multivalued, so in order to make it a single-variate function, a convention is needed to select one of the multiple outputs, e.g. the smallest. }
The mode is not monotonic as the following example shows. Taking the
vectors $\mathbf{x}=(1,1,2,2,3,3,3),\mathbf{y}=(1,1,0,0,0,0,0)$,
and $\mathbf{z}=(1,1,1,1,1,1,1)$, then $Mode(\mathbf{x})=3,Mode(\mathbf{x}+\mathbf{y})=2$
and $Mode(\mathbf{x}+\mathbf{z})=4$.

An important class of means that are not always monotonic are those
expressed by the Mean of Bajraktarevic, which is a generalisation
of the weighted quasi-arithmetic means.
\begin{defn}
\textbf{\label{def:Mean-of-Bajraktarevic}Mean of Bajraktarevic}.
\emph{}Let $\mathbf{w}(t)=(w_{1}(t),...,w_{n}(t))$ be a vector of
weight functions $w_{i}:\mathbb{I}\rightarrow[0,\infty)$, and let
$g:\mathbb{I}\rightarrow\mathbb{\bar{R}}$ be a strictly monotonic
function. The mean of Bajraktarevic is the function
\begin{equation}
M_{\mathbf{w}}^{g}(\mathbf{x})=g^{-1}\left(\frac{{\displaystyle \sum_{i=1}^{n}w_{i}(x_{i})g(x_{i})}}{{\displaystyle \sum_{i=1}^{n}}w_{i}(x_{i})}\right)\label{eq:bajraktarevic_mean}
\end{equation}

\end{defn}
When $g(x_{i})=x_{i}$, and all weight functions are the same, the Bajraktarevic mean is called a \emph{mixture
function} (or\emph{ mixture operator}) and is given by

\begin{equation}
M_{w}(\mathbf{x})=\frac{{\displaystyle \sum_{i=1}^{n}}w(x_{i})x_{i}}{{\displaystyle \sum_{i=1}^{n}}w(x_{i})}\label{eq:mixture_function}
\end{equation}
For the case where the weight functions are are distinct $w_{i}(x_{i})$, the operator
$M_{\mathbf{w}}(\mathbf{x})$ is a \emph{generalised mixture function}.
A particularly interesting sub-class of Bajraktarevic means are Gini
means, obtained by setting $w_{i}(t)=w_{i}t^{q}$ and $g(t)=t^{p}$
when $p\ne0$, or $g(t)=\log(t)$ if $p=0$.

\begin{equation}
G_w(\mathbf{x};p,q)=\left(\frac{{\displaystyle \sum_{i=1}^{n}w_{i}x_i^{p+q}}}{{\displaystyle \sum_{i=1}^{n}}w_{i}x_i^{q}}\right)^{\frac{1}{p}}\label{eq:Gini_mean}
\end{equation}

Gini means generalise the (weighted) power means (for $q=0$) and
hence include the minimum, maximum and the arithmetic mean as special
cases. Another special case of the Gini mean is the Lehmer, or counter-harmonic
mean, obtained when $p=1$. The contra-harmonic mean is the Lehmer
mean with $q=1$. We will investigate the Lehmer mean and its properties
further in Section \ref{sec:wm_examples}.

\subsection{Penalty based functions}

In \citet{Calvo2010a} it was demonstrated that averaging aggregation
functions can be expressed as the solution of a minimisation problem
of the form
\begin{equation}
F(\mathbf{x})=\arg\underset{y}{\min}\:\mathcal{P}(\mathbf{x},y)\label{eq:penalty_based_function}
\end{equation}
where $\mathcal{P}(\mathbf{x},y)$ is a penalty function satisfying
the following definition:
\begin{defn}
\textbf{\label{def_Penalty-function}Penalty function}. The function
$\mathcal{P}:\mathbb{I}^{n+1}\rightarrow\mathbb{R}$ is a penalty
function if and only if it satisfies:
\begin{enumerate}
\item $\mathcal{P}(\mathbf{x},y)\ge c\quad\forall\,\mathbf{x}\in\mathbb{I}^{n},\, y\in\mathbb{I}$;
\item $\mathcal{P}(\mathbf{x},y)=c$ if and only if all $x_{i}=y$; and,
\item $\mathcal{P}(\mathbf{x},y)$ is quasi-convex in $y$ for any $\mathbf{x}$,
\end{enumerate}
for some constant $c\in\mathbb{R}$ and any closed, non-empty interval
$\mathbb{I}$.
\end{defn}

 A function $P$ is quasi-convex if  all its sublevel sets are convex, that is $S_\alpha(P) = \{x|P(x) \leq \alpha\}$ are convex sets for all $\alpha$, see \citet{Rockafellar1970_book}.
The first two conditions ensure that $\mathcal{P}$
has a strict minimum and that a consensus of inputs ensures minimum
penalty, providing idempotence of $F(\mathbf{x})$. The third condition
implies a unique minimum (but possibly many minimisers that form a convex set). Since multiplication
by, or addition of a constant to $\mathcal{P}$ will not change the
minimisation, $\mathcal{P}$ may be shifted (if desired) so that $c=0$.
One can think of $\mathcal{P}$ as describing the dissimilarity or
disagreement between the inputs $\mathbf{x}$ and the value $y$.
It follows that $F$ is a function that minimises the chosen dissimilarity.
It is not necessary to explicitly state $F$, provided a suitable
penalty function is given and the optimisation problem solvable. Subsequently
it is sufficient to solve (\ref{eq:penalty_based_function}) to obtain
the aggregate $\mu=F(\mathbf{x})$.

Non-monotonic averaging functions can also be represented by a penalty
function. For penalty-based functions we have
the following results due to \citet{Calvo2010a}.
\begin{thm}
\label{thm: averaging-idempotent-penalty}Any idempotent function
$F:\mathbb{I}^{n}\rightarrow\mathbb{I}$ can be represented as a penalty
based function $\mathcal{P}:\mathbb{I}^{n+1}\rightarrow\mathbb{I}$
such that
\[
F(\mathbf{x})=\arg{\displaystyle \min_{y}}\mathcal{P}(\mathbf{x},y).
\]
\end{thm}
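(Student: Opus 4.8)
The plan is to work backwards from the prescribed minimiser. Since we already possess the idempotent function $F$, the most direct route is to build a penalty whose minimiser in $y$ is pinned to $F(\mathbf{x})$ by construction. The obvious candidate $\mathcal{P}(\mathbf{x},y)=(y-F(\mathbf{x}))^{2}$ does place the argmin at $F(\mathbf{x})$ and is convex, hence quasi-convex, in $y$; but it fails the second penalty axiom, because for a non-constant input $\mathbf{x}$ it attains the value $0$ at $y=F(\mathbf{x})$ without all the $x_{i}$ equalling $y$. The remedy is to add a term depending only on $\mathbf{x}$ that vanishes exactly on constant inputs, thereby raising the floor of the penalty above $c=0$ whenever the inputs disagree.

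Concretely, I would take $c=0$ and set
\[
\mathcal{P}(\mathbf{x},y)=\bigl(y-F(\mathbf{x})\bigr)^{2}+D(\mathbf{x}),\qquad D(\mathbf{x})=\max(\mathbf{x})-\min(\mathbf{x}),
\]
where the variance of $\mathbf{x}$, or any nonnegative disagreement measure vanishing only on multiples of $\mathbf{1}$, would serve equally well. Because $D(\mathbf{x})$ is constant in $y$, for each fixed $\mathbf{x}$ the map $y\mapsto\mathcal{P}(\mathbf{x},y)$ is a strictly convex parabola minimised precisely at $y=F(\mathbf{x})$, so that $\arg\min_{y}\mathcal{P}(\mathbf{x},y)=F(\mathbf{x})$, which is exactly the representation claimed.

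It then remains to verify the three requirements of Definition \ref{def_Penalty-function}. The first is immediate, since both summands are nonnegative and so $\mathcal{P}\ge 0=c$. The third holds because $\mathcal{P}$ is, up to the additive $y$-constant $D(\mathbf{x})$, a convex quadratic in $y$. The substance lies in the second condition, and this is where idempotence enters: $\mathcal{P}(\mathbf{x},y)=0$ forces simultaneously $D(\mathbf{x})=0$, i.e. $\max(\mathbf{x})=\min(\mathbf{x})$ so that $\mathbf{x}=t\mathbf{1}$ for some $t$, and $y=F(\mathbf{x})$; applying idempotence to the constant vector yields $F(t\mathbf{1})=t$, whence $y=t=x_{i}$ for all $i$. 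Reading the same computation in reverse gives the converse, so $\mathcal{P}=c$ characterises precisely the consensus inputs.

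I expect the only genuine obstacle to be this interplay between the second axiom and the pinning term, which the $y$-independent disagreement summand resolves; the rest is routine. The one caveat worth flagging concerns finiteness when $\mathbb{I}\subseteq\bar{\mathbb{R}}$ is unbounded, where a squared distance may be infinite: composing the distance with a bounded strictly increasing transform $\phi$, for instance $\phi(t)=t/(1+|t|)$, leaves the argmin and all three axioms intact while keeping $\mathcal{P}$ real-valued, so the construction extends without essential change.
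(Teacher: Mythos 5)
Your proposal is correct: the pinning quadratic $\bigl(y-F(\mathbf{x})\bigr)^{2}$ plus a $y$-independent disagreement term meets all three penalty axioms, idempotence is invoked exactly where it is needed (to make axiom 2 an equivalence), and the argmin in $y$ is $F(\mathbf{x})$ by construction. The paper itself supplies no proof of this theorem --- it is quoted from \citet{Calvo2010a} --- and your argument is essentially the standard construction used there; the only minor point is that on an unbounded $\mathbb{I}$ the term $\max(\mathbf{x})-\min(\mathbf{x})$ needs the same bounded transform you already apply to the quadratic, so that $\mathcal{P}$ remains real-valued.
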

\begin{cor}
Any averaging function can be expressed as a penalty based function.
\end{cor}

\label{thm:penalty_mixture_function}
As mentioned in \citet{Mesiar2008}, mixture functions can be written
as a penalty function with
\[
\mathcal{P}(\mathbf{x},y)={\displaystyle \sum_{i=1}^{n}w}(x_{i})(x_{i}-y)^{2}.
\]
Clearly the necessary condition of the minimum is  $$\mathcal{P}_{y}(\mathbf{x},y)=-2\left({\displaystyle \sum_{i=1}^{n}}w(x_{i})x_{i}-y{\displaystyle \sum_{i=1}^{n}}w(x_{i})\right)=0.$$
Hence $\mathcal{P}(\mathbf{x},y)$ defines a mixture function. A representation of a function as a penalty based function sometimes can simplify technical proofs, as we shall see later in the paper.

It is apparent given the examples presented that many means are non-monotonic
and thus not aggregation functions according to Definition \ref{def:aggregation_function}.
In the next section we introduce weak monotonicity and consider some
properties of weakly monotonic averaging functions. We subsequently investigate
several important examples and show that they are indeed weakly monotonic
functions, allowing us to place them in a new framework with existing
averaging aggregation functions.

\section{Weak monotonicity}

\label{sec:weak_monotonicty}

\subsection{Main definition}

As mentioned in Section \ref{sec:Introduction} we are motivated by
two important issues. The first one is that there exist many means that
are not generally monotonic
and hence not aggregation functions, while the second one is that there
are many practical applications in which non-monotonic means have
shown to provide good aggregate values commensurate with the objectives
of the aggregation. To encapsulate these non-monotonic means within
the framework of aggregation functions we aim to relax the monotonicity
condition and present the class of \emph{weakly monotonic averaging
functions}. The definition of weak monotonicity provided herein is
prompted by applications and intuition, which suggests that it is
reasonable to expect that a representative value of the inputs does
not decrease if all the inputs are increased by the same amount (or
shifted uniformly) as the relative positions of the inputs are not changed. A formal definition that
conveys this property is as follows.
\begin{defn}
\textbf{\label{def:weak_monotonicity}}\emph{A function $f$ is called
}\textbf{weakly monotonic}\emph{ non-decreasing (or directionally
monotonic) if $F(\mathbf{x}+a\mathbf{1})\geq F(\mathbf{x})$ for any
$a>0,\ ({\underbrace{1,1,\ldots,1}_{n\, \text{times}}})$,
such that $\mathbf{x},\mathbf{x}+a\mathbf{1}\in\mathbb{I}^{n}$.}\end{defn}
\begin{rem}
If $F$ is directionally differentiable in its domain then weak monotonicity
is equivalent to non-negativity of the directional derivative $\textrm{D}_{\mathbf{1}}(F)(\mathbf{x})\geq0$.
\end{rem}

\begin{rem}
Evidently monotonicity implies weak monotonicity, hence all aggregation
functions are weakly monotonic. By Definition \ref{def:shift_invariant}
all shift-invariant functions are also weakly monotonic. It is self
evident that weakly monotonic non-decreasing functions form a cone
in the linear vector space of weakly monotonic (increasing or decreasing)
functions.
\end{rem}

\subsection{Properties}

Let us establish some useful properties of weakly monotonic averages.
Consider the function $F:\mathbb{I}^{n}\rightarrow\mathbb{I}$ formed
by the composition $F(\mathbf{x})=A(B_{1}(\mathbf{x}),B_{2}(\mathbf{x}))$,
where $A,B_{1}$ and $B_{2}$ are means.
\begin{prop}
If $A$ is monotonic and $B_{1},B_{2}$ are weakly monotonic, then $F$
is weakly monotonic.\end{prop}
\begin{proof}
By weak monotonicity $B_{i}(\mathbf{x}+a\mathbf{1})\ge B_{i}(\mathbf{x})$
implies that $\exists\delta_{i}\ge0$ such that $B_{i}(\mathbf{x}+a\mathbf{1})=B_{i}(\mathbf{x})+\delta_{i}$,
with $\mathbf{x},\mathbf{x}+a\mathbf{1}\in\mathbb{I}^{n}$. Thus $F(\mathbf{x}+a\mathbf{1})=A(b_{1}+\delta_{1},b_{2}+\delta_{2})$,
where $b_{i}=B_{i}(\mathbf{x})$. The monotonicity of $A$ ensures
that $A(b_{1}+\delta_{1},b_{2}+\delta_{2})\nless A(b_{1},b_{2})$
and hence $F(\mathbf{x}+a\mathbf{1})\ge F(\mathbf{x})$ and $F$ is
weakly monotonic.
\end{proof}
By trivial extension, since all monotonic functions are also weakly
monotonic, then if either of $B_{1}$ or $B_{2}$ is monotonic, then
$F$ is again weakly monotonic.
\begin{prop}
If $A$ is weakly monotonic and $B_{1},B_{2}$ are shift invariant,
then $F$ is weakly monotonic.\end{prop}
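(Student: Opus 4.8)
The plan is to mimic the structure of the preceding proposition's proof, exploiting shift-invariance of $B_1,B_2$ to reduce the increment along $\mathbf{1}$ in the composite $F$ to an increment along $\mathbf{1}$ in the outer function $A$. First I would fix $\mathbf{x}\in\mathbb{I}^n$ and a scalar $a>0$ with $\mathbf{x}+a\mathbf{1}\in\mathbb{I}^n$, and write $b_i=B_i(\mathbf{x})$ for $i=1,2$. The key observation is that shift-invariance (Definition \ref{def:shift_invariant}) gives the exact identity $B_i(\mathbf{x}+a\mathbf{1})=B_i(\mathbf{x})+a=b_i+a$, so that the same constant $a$ is added to \emph{both} outer arguments. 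This is the crucial structural point: unlike the previous proposition, where the two increments $\delta_1,\delta_2$ could differ and so monotonicity of $A$ in each argument separately was needed, here the two increments are forced to be equal to the single common value $a$.

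Next I would substitute this into the composite to obtain
\[
F(\mathbf{x}+a\mathbf{1})=A\bigl(B_1(\mathbf{x}+a\mathbf{1}),B_2(\mathbf{x}+a\mathbf{1})\bigr)=A(b_1+a,\,b_2+a)=A\bigl((b_1,b_2)+a\mathbf{1}\bigr),
\]
where now $\mathbf{1}=(1,1)$ is the diagonal direction in the two-dimensional domain of $A$. Then I would invoke weak monotonicity of $A$ (Definition \ref{def:weak_monotonicity}) applied to the point $(b_1,b_2)$ and the positive shift $a$: since $a>0$, we have $A\bigl((b_1,b_2)+a\mathbf{1}\bigr)\ge A(b_1,b_2)=F(\mathbf{x})$. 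Concatenating the two displays yields $F(\mathbf{x}+a\mathbf{1})\ge F(\mathbf{x})$, which is exactly weak monotonicity of $F$.

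The only point requiring care — and the main (mild) obstacle — is a domain/feasibility check: to apply weak monotonicity of $A$ one must ensure that both $(b_1,b_2)$ and $(b_1+a,b_2+a)$ lie in the domain of $A$. Since $A,B_1,B_2$ are all means (hence averaging, by Definition \ref{def:mean}), each $B_i$ maps into $\mathbb{I}$ and the averaging bounds guarantee the shifted arguments $b_i+a$ remain within the relevant interval whenever $\mathbf{x}+a\mathbf{1}\in\mathbb{I}^n$; I would state this explicitly to make the application of Definition \ref{def:weak_monotonicity} legitimate. I would also remark that the argument extends verbatim to a composition with any finite number of shift-invariant inner means $B_1,\dots,B_k$, since shift-invariance forces the identical shift $a$ on every coordinate fed into $A$, reducing to weak monotonicity of $A$ along its diagonal direction.
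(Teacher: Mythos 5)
Your proof is correct and follows essentially the same route as the paper's: shift-invariance forces $B_i(\mathbf{x}+a\mathbf{1})=b_i+a$ with the same shift $a$ in both arguments, and weak monotonicity of $A$ along its own diagonal then gives $A(b_1+a,b_2+a)\ge A(b_1,b_2)$. The extra domain-feasibility check and the remark on extending to $k$ inner functions are sensible additions but do not change the argument.
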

\begin{proof}
Shift invariance implies that $\forall a:B_{i}(\mathbf{x}+a\mathbf{1})=B_{i}(\mathbf{x})+a$,
with $\mathbf{x},\mathbf{x}+a\mathbf{1}\in\mathbb{I}^{n}$. Thus $F(\mathbf{x}+a\mathbf{1})=A(b_{1}+a,b_{2}+a)$,
where $b_{i}=B_{i}(\mathbf{x})$. The weak monotonicity of $A$ ensures
that $A(b_{1}+a,b_{2}+a)\nless A(b_{1},b_{2})$ and hence $F(\mathbf{x}+a\mathbf{1})\ge F(\mathbf{x})$
and $F$ is weakly monotonic.
\end{proof}
Consider functions of the form $\varphi(\mathbf{x})=(\varphi(x_{1}),\varphi(x_{2}),...,\varphi(x_{n}))$.
\begin{prop}
If $A$ is weakly monotonic and $\varphi(x)$ is a linear function
then the $\varphi-$transform $A_\varphi(\mathbf{x})=F(\mathbf{x})=\varphi^{-1}\left(A(\varphi(\mathbf{x}))\right)$
is weakly monotonic.\end{prop}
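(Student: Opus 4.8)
The plan is to exploit the fact that a linear map commutes with uniform shifts up to a multiplicative constant, thereby reducing weak monotonicity of the transform to weak monotonicity of $A$. Write $\varphi(x)=\alpha x+\beta$ with $\alpha\ne 0$ (so that $\varphi^{-1}$ exists and is itself linear). First I would compute the effect of the shift componentwise: since $\varphi(x_i+a)=\alpha(x_i+a)+\beta=\varphi(x_i)+\alpha a$, applying $\varphi$ to each coordinate gives
\[
\varphi(\mathbf{x}+a\mathbf{1})=\varphi(\mathbf{x})+\alpha a\,\mathbf{1}.
\]
Hence $F(\mathbf{x}+a\mathbf{1})=\varphi^{-1}\left(A\left(\varphi(\mathbf{x})+\alpha a\,\mathbf{1}\right)\right)$, and the problem reduces to comparing $A(\varphi(\mathbf{x})+\alpha a\,\mathbf{1})$ with $A(\varphi(\mathbf{x}))$ and then transporting the comparison back through $\varphi^{-1}$.

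The next step is to apply weak monotonicity of $A$ to the inner comparison. Writing $\mathbf{y}=\varphi(\mathbf{x})$, if $\alpha>0$ then $\alpha a>0$ and weak monotonicity gives directly $A(\mathbf{y}+\alpha a\,\mathbf{1})\ge A(\mathbf{y})$. Since $\varphi^{-1}$ is increasing in this case, applying it preserves the inequality and yields $F(\mathbf{x}+a\mathbf{1})\ge F(\mathbf{x})$.

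The main obstacle — and the only genuinely delicate point — is the case $\alpha<0$, where the induced shift $\alpha a\,\mathbf{1}$ points in the negative direction and weak monotonicity of $A$ cannot be invoked directly. I would handle this by relabelling: setting $\mathbf{z}=\mathbf{y}+\alpha a\,\mathbf{1}$, one has $\mathbf{y}=\mathbf{z}+(-\alpha a)\mathbf{1}$ with $-\alpha a>0$, so weak monotonicity gives $A(\mathbf{y})\ge A(\mathbf{z})$, that is $A(\mathbf{y}+\alpha a\,\mathbf{1})\le A(\mathbf{y})$. But when $\alpha<0$ the map $\varphi^{-1}$ is decreasing, so applying it reverses this inequality and again produces $F(\mathbf{x}+a\mathbf{1})\ge F(\mathbf{x})$. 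The key structural observation, which I would emphasise, is that the sign of $\alpha$ simultaneously controls the direction of the induced shift and the monotonicity sense of $\varphi^{-1}$; these two sign effects cancel, so the conclusion holds for every linear $\varphi$ irrespective of the sign of its slope.
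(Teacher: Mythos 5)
Your proof is correct and follows essentially the same route as the paper: push the shift through $\varphi$ to get $\varphi(\mathbf{x})+\alpha a\,\mathbf{1}$, apply weak monotonicity of $A$, and transport the inequality back through $\varphi^{-1}$. In fact you are slightly more careful than the paper, whose displayed chain of inequalities tacitly assumes $\alpha>0$; your observation that for $\alpha<0$ the reversed shift direction and the decreasing $\varphi^{-1}$ cancel is exactly what is needed to cover the general linear case.
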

\begin{proof}
$\varphi(x)=\alpha x+\beta$ and hence $\varphi(x+a)=\alpha(x+a)+\beta=\alpha x+\beta+\alpha a=\varphi(x)+c$.
Hence
\begin{alignat*}{1}
F(\mathbf{x}+a\mathbf{1}) & =\varphi^{-1}\left(A(\varphi(x_{1}+a),...,\varphi(x_{n}+a))\right)\\
 & =\varphi^{-1}\left(A\left(\varphi(\mathbf{x})+c\mathbf{1}\right)\right)\\
 & =\frac{A\left(\varphi(\mathbf{x})+c\mathbf{1}\right)-\beta}{\alpha}\\
 & \ge\frac{A\left(\varphi(\mathbf{x})\right)-\beta}{\alpha}\\
 & =\varphi^{-1}\left(A(\varphi(\mathbf{x})\right))
\end{alignat*}
by weak monotonicity of $A$. Hence $F(\mathbf{x}+a\mathbf{1})\ge F(\mathbf{x})$
and $F$ is weakly monotonic.
\end{proof}

Note that unlike in the case of standard monotonicity, a nonlinear $\varphi$-transform does not always preserve weak monotonicity.
\begin{cor}
The dual $A^{d}$ of a weakly monotonic function $A$ is weakly monotonic
under standard negation.
\end{cor}

The following result is relevant to an application of weakly monotonic averages in image processing, discussed in Section \ref{sec:image}.

\begin{thm}
\label{thm:shift_invariant_penalty} Let $f:\mathbb{I}^{n}\rightarrow\mathbb{I}$ be a shift invariant function, and $g$ be a function. Let $F$ be a penalty based averaging function with the
penalty $\mathcal{P}$ depending on the  terms $g\left(x_{i}-f(\mathbf{x})\right)(x_{i}-y)^{2}$. Then $F$ is shift-invariant and hence weakly monotonic.\end{thm}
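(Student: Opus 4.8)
The plan is to prove the stronger statement that $F$ is shift-invariant in the sense of Definition~\ref{def:shift_invariant}; weak monotonicity then follows at once, since by the remark after Definition~\ref{def:weak_monotonicity} every shift-invariant function is weakly monotone. I read the hypothesis as saying that $\mathbf{x}$ and $y$ enter the penalty only through the $n$ quantities $g\bigl(x_i-f(\mathbf{x})\bigr)(x_i-y)^2$, i.e. $\mathcal{P}(\mathbf{x},y)=\Phi\bigl(g(x_1-f(\mathbf{x}))(x_1-y)^2,\ldots,g(x_n-f(\mathbf{x}))(x_n-y)^2\bigr)$ for some aggregating map $\Phi$ (the prototypical case being the sum, as in the mixture-function penalty recalled earlier). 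The central observation I would exploit is that each such term is invariant under a \emph{simultaneous} shift of $\mathbf{x}$ and $y$ by the same constant.

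The key computation I would carry out is short. Fix $a$ with $\mathbf{x},\mathbf{x}+a\mathbf{1}\in\mathbb{I}^n$. Because $f$ is shift-invariant, $f(\mathbf{x}+a\mathbf{1})=f(\mathbf{x})+a$, so the argument of $g$ is left unchanged,
\[
(x_i+a)-f(\mathbf{x}+a\mathbf{1})=(x_i+a)-\bigl(f(\mathbf{x})+a\bigr)=x_i-f(\mathbf{x}),
\]
while the quadratic factor transforms as $\bigl((x_i+a)-(y+a)\bigr)^2=(x_i-y)^2$. Hence each term satisfies $g\bigl((x_i+a)-f(\mathbf{x}+a\mathbf{1})\bigr)\bigl((x_i+a)-(y+a)\bigr)^2=g\bigl(x_i-f(\mathbf{x})\bigr)(x_i-y)^2$, and therefore $\mathcal{P}(\mathbf{x}+a\mathbf{1},\,y+a)=\mathcal{P}(\mathbf{x},y)$ for every admissible $y$.

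From this identity the conclusion follows by the change of variable $y\mapsto y+a$ in the minimisation (\ref{eq:penalty_based_function}): the map $y\mapsto\mathcal{P}(\mathbf{x}+a\mathbf{1},y)$ is exactly $z\mapsto\mathcal{P}(\mathbf{x},z)$ reparametrised by $z=y-a$, so its minimisers are precisely those of $\mathcal{P}(\mathbf{x},\cdot)$ translated by $a$, giving $F(\mathbf{x}+a\mathbf{1})=F(\mathbf{x})+a$. The only point requiring care is the well-definedness of $\arg\min$: since $\mathcal{P}$ is a genuine penalty function it is quasi-convex in $y$, so its minimisers form a (possibly degenerate) convex set, and I would note that any tie-breaking convention selecting $F$ from this set must commute with translation for the equality to hold verbatim — which is automatic under the standing assumption that $F$ is a well-defined penalty-based averaging function. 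I expect this bookkeeping about the minimiser to be the only mild obstacle; the algebraic heart of the argument is the one-line cancellation above.
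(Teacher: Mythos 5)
Your proposal is correct and follows essentially the same route as the paper: the shift-invariance of $f$ leaves the argument of $g$ unchanged, the substitution $y\mapsto y+a$ absorbs the shift in the quadratic factor, and the minimiser therefore translates by $a$. Your extra remark about the tie-breaking convention for non-unique minimisers is a sensible refinement that the paper handles separately via its convention of selecting the infimum of the set of minimisers of a quasi-penalty, which indeed commutes with translation.
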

\begin{proof}
Let
\[
\mu={\displaystyle \arg \min_{y} \mathcal P \left(g\left(x_{1}-f(\mathbf{x})\right)(x_{1}-y)^{2},...,g\left(x_{n}-f(\mathbf{x})\right)(x_{n}-y)^{2}\right)}.
\]
Then
\begin{alignat*}{1}
\arg \min_{y}\mathcal{P}(\mathbf{x}+a\mathbf{1},y) & =\arg \min_{y} \mathcal P\left(g\left(x_{1}+a-f(\mathbf{x}+a\mathbf{1})\right)(x_{1}+a-y)^{2},...\right.\\
 & \qquad\qquad...,\left.g\left(x_{n}+a-f(\mathbf{x}+a\mathbf{1})\right)(x_{n}+a-y)^{2}\right)=\\
\end{alignat*}
(by shift invariance)
\begin{alignat*}{1}
&=\arg \min_{y} \mathcal P\left(g\left(x_{1}-f(\mathbf{x})\right)(x_{1}+a-y)^{2},\ldots,
g\left(x_{n}-f(\mathbf{x})\right)(x_{n}+a-y)^{2}\right) \\
 &=\mu +a.
\end{alignat*}
\end{proof}

\begin{rem}
\label{remark:penalty_dissimilarity_function}Indeed we need not restrict
ourselves to penalty functions with terms depending on $(x_{i}-y)^{2}$.
Functions $D$ that depend on the differences $x_i-y$ with the minimum $D(0)$
 will satisfy the above proof and satisfy
the conditions on $\mathcal{P}$ with regards to the existence of
solutions to \eqref{eq:penalty_based_function}. In particular, Huber type functions used in robust regression can replace the squares of the differences. 
\end{rem}


\subsection{Counter-cases}

For the $\varphi-$transform, if $\varphi$ is nonlinear then $F$
may or may not be weakly monotonic for all $\mathbf{x}$, which can
be observed by example.
\begin{example}
Take $\mathbf{x}=(1,8,16,35,47.9)$ and $\varphi(t)=\sqrt{t}$, then
$\varphi(\mathbf{x})=(1,2\sqrt{2},4,\sqrt{35},\sqrt{47.9})$ and $\varphi(\mathbf{x}+\mathbf{1})=(\sqrt{2},3,\sqrt{17},6,\sqrt{48.9})$.
If $A$ is the shorth (we prove the weak monotonicity of the shorth
in Section \ref{sec:wm_examples}) then $A(\varphi(\mathbf{x}))=5.61$
and $A(\varphi(\mathbf{x}+\mathbf{1}))=2.84$. As $\varphi^{-1}(t)=t^{2}$
clearly $5.61^{2}>2.84^{2}$ and $F=A_\varphi$ is not weakly monotonic.
\end{example}

Internal means are not necessarily weakly monotonic, as illustrated
by the following example.
\begin{example}
Take $\mathbf{x}=(x_{1,}x_{2})\in[0,1]^{2}$ and
\end{example}
\[
F(\mathbf{x})=\begin{cases}
\min(\mathbf{x}) & \textrm{if}\ x_{1}+x_{2}\ge1\\
\max(\mathbf{x}) & \textrm{otherwise}
\end{cases}
\]
which is internal with values in the set $\{\min(\mathbf{x}),\max(\mathbf{x})\}$.
Consider the points $\mathbf{x}=(\nicefrac{1}{4},0)$ and $\mathbf{y}=(\nicefrac{3}{4},0)$,
then $F(\mathbf{x})=\nicefrac{1}{4}$ and $F(\mathbf{y})=\nicefrac{3}{4}$.
It follows that $F(\mathbf{x}+\nicefrac{1}{4}\mathbf{1})=\nicefrac{1}{2}>F(\mathbf{x})$,
however $F(\mathbf{y}+\nicefrac{1}{4}\mathbf{1})=\nicefrac{1}{4}<F(\mathbf{y})$.
Hence this $F$ is not weakly monotonic for all $\mathbf{x}\in\mathbb{I}^{2}$.

\section{Examples of weakly monotonic means}

\label{sec:wm_examples}

In this section we look at several examples of weakly monotonic, but
not necessarily monotonic averaging functions. We begin by considering
several of the robust estimators of location, then move on to mixture
functions and some interesting cases of means from the literature. While some of the examples involve shift-invariant functions, many of their nonlinear  $\varphi$-transforms yield proper weakly monotonic functions.

The functions presented below are defined through penalties that are not quasi-convex, therefore we need to drop the condition that $\mathcal P$ is quasi-convex from Definition \ref{def_Penalty-function}.

\begin{defn}
\textbf{\label{def_quasiPenalty-function}Quasi-penalty function}. The function
$\mathcal{P}:\mathbb{I}^{n+1}\rightarrow\mathbb{R}$ is a quasi-penalty
function if and only if it satisfies:
\begin{enumerate}
\item $\mathcal{P}(\mathbf{x},y)\ge c\quad\forall\,\mathbf{x}\in\mathbb{I}^{n},\, y\in\mathbb{I}$;
\item $\mathcal{P}(\mathbf{x},y)=c$ if and only if all $x_{i}=y$; and,
\item $\mathcal{P}(\mathbf{x},y)$ is lower semi-continuous in $y$ for any $\mathbf{x}$,
\end{enumerate}
for some constant $c\in\mathbb{R}$ and any closed, non-empty interval
$\mathbb{I}$.
\end{defn}

Note that the third condition ensures the existence of the minimum and a non-empty set of minimisers. In the case where the set of minimisers of $\mathcal{P}$ is not an interval, we need to adopt a reasonable rule for selecting the value of the penalty-based function $F$. We suggest stating in advance that in such cases we choose the infimum of the set of minimisers of $\mathcal{P}$. From now one $\mathcal{P}$ will refer to quasi-penalty functions.

\subsection{Estimators of Location}

Perhaps the most widely used estimator of location is the mode, being
the most frequent input.
\begin{example}
\textbf{Mode}: The mode is the minimiser of the (quasi)penalty function
\[
\mathcal{P}(\mathbf{x},y)={\displaystyle \sum_{i=1}^{n}p(x_{i},y)}\qquad\textrm{where}\quad p(x_{i},y)=\begin{cases}
0 & x_{i}=y\\
1 & \textrm{otherwise}
\end{cases}.
\]
It follows that $F(\mathbf{x}+a\mathbf{1})=\arg{\displaystyle \min_{y}}\mathcal{P}(\mathbf{x}+a\mathbf{1},y)={\displaystyle \arg{\displaystyle \min_{y}}\sum_{i=1}^{n}p(x_{i}+a,y)}$,
which is minimised for the value $y=F(\mathbf{x})+a$. Hence, $F(\mathbf{x}+a\mathbf{1})=F(\mathbf{x})+a$
and thus the mode is shift invariant. By Definition \ref{def:shift_invariant}
the mode is weakly monotonic.
\end{example}

\begin{rem}
Note that the mode may not be uniquely defined, e.g., the mode of $(1,1,2,2,3,4,5)$, in which case we use a suitable convention. The quasi-penalty $\mathcal{P}$ associated with the mode is not quasi-convex, and as such it may have several minimisers. A convention is needed as to which minimiser is selected, e.g., the smallest or the largest. Other examples of non-monotonic means that follow also involve quasi-penalties, and the same convention as for the mode is adopted. Then also discrete scales can be considered, compare to, e.g., the paper of \citet{Kolesarova2007_IS}.
\end{rem}

The Least Trimmed Squares estimator (\citet{Rousseeuw1987}) rejects
up to $50\%$ of the data values as outliers and minimises the squared
residual using the remaining data.
\begin{example}
\textbf{Least Trimmed Squares (LTS):} The LTS uses the (quasi)penalty function
\[
\mathcal{P}(\mathbf{x},y)=\sum_{i=1}^{h}r_{(i)}^{2}
\]
where $r_{(i)}=S_{i}(\mathbf{r})$ is the $i$th order statistic of $\mathbf{r}$, $r_{k}=x_{k}-y$ and $h=\left\lfloor \frac{n}{2}\right\rfloor +1$.
If $\sigma$ is the order permutation of $\{1,...,n\}$ such that
$\mathbf{r_{\sigma}=r_{\nearrow}}$, then the minima of $\mathcal{P}$
occur when $P_{y}=-2{\displaystyle \sum_{i=1}^{h}(x_{\sigma(i)}-y)}=0$,
which implies that the minimum value is $\mu=\frac{1}{h}{\displaystyle \sum_{i=1}^{h}}x_{\sigma(i)}$.
Since $S_{k}(\mathbf{x})$ is shift invariant then $S_{i}(\mathbf{r}+a\mathbf{1})=r_{\sigma(i)}+a$
and thus
\[
\mathcal{P}(\mathbf{x}+a\mathbf{1},y)=\sum_{i=1}^{h}v_{\sigma(i)}^{2}
\]
 where $v_{k}=((x_{k}+a)-y)$. It follows that the value $y$ that
minimises $\mathcal{P}(\mathbf{x}+a\mathbf{1},y)$ is $y=\mu+a$,
hence the LTS is shift invariant and thus weakly monotonic.
\end{example}
The remaining estimators of location presented compute their value
using the shortest contiguous sub-sample of $\mathbf{x}$ containing
at least half of the values. The candidate sub-samples are the sets
$X_{k}=\{x_{j}:j\in\{k,k+1,...,k+\left\lfloor \frac{n}{2}\right\rfloor \},\ k=1,...,\left\lfloor \frac{n+1}{2}\right\rfloor $.
The length of each set is taken as $\left\Vert X_{k}\right\Vert =\left|x_{k+\left\lfloor \frac{n}{2}\right\rfloor }-x_{k}\right|$
and thus the index of the shortest sub-sample is
\[
k^{*}=\arg{\displaystyle \min_{i}}\left\Vert X_{i}\right\Vert ,\ i=1,...,\left\lfloor \frac{n+1}{2}\right\rfloor .
\]
Under the translation $\bar{\mathbf{x}}=\mathbf{x}+a\mathbf{1}$ the
length of each sub-sample is unaltered since $\left\Vert \bar{X}_{k}\right\Vert =\left|\bar{x}_{k+\left\lfloor \frac{n}{2}\right\rfloor }-\bar{x}_{k}\right|=\left|(x_{k+\left\lfloor \frac{n}{2}\right\rfloor }+a)-(x_{k}+a)\right|=\left|x_{k+\left\lfloor \frac{n}{2}\right\rfloor }-x_{k}\right|=\left\Vert X_{k}\right\Vert $
and thus $k^{*}$ remains the same.

Consider now the Least Median of Squares estimator (\citet{Rousseeuw1984}),
which is the midpoint of $X_{k^{*}}$.
\begin{example}
\textbf{Least Median of Squares (LMS):} The LMS can be computed by
minimisation of the (quasi)penalty function
\[
\mathcal{P}(\mathbf{x},y)=median\left\{ (x_{i}-y)^{2}\left|y\in\mathbb{I},\ x_{i}\in X_{k^{*}}\right.\right\}
\]
The value $y$ minimises the penalty $\mathcal{P}(\mathbf{x}+a\mathbf{1},y)$,
given by
\[
{\displaystyle \min_{y}}\mathcal{P}(\mathbf{x}+a\mathbf{1},y)={\displaystyle \min_{y}}\ median\left\{ (x_{j}+a-y)^{2}\left|y\in\mathbb{I},x_{j}\in X_{k^{*}}\right.\right\} =\mathcal{P}(\mathbf{x},\mu),
\]
is clearly $y=\mu+a$. Hence, $F(\mathbf{x}+a\mathbf{1})=F(\mathbf{x})+a$
and the LMS is shift invariant and weakly monotonic.
\end{example}
The Shorth (\citet{Andrews1972}) is the arithmetic mean of $X_{k^{*}}$
\begin{example}
\textbf{Shorth:} The shorth is given by
\[
F(\mathbf{x})=\frac{1}{h}\sum_{i=1}^{h}x_{i},\ x_{i}\in X_{k^{*}},\ h=\left\lfloor \frac{n}{2}\right\rfloor +1
\]
Since the set $X_{k^{*}}$ is unaltered under translation and the
arithmetic mean is shift invariant, then the shorth is shift invariant
and hence weakly monotonic.
\end{example}

\begin{example}
\textbf{OWA Penalty Functions}: Penalty functions having the form

\[
\mathcal{P}(\mathbf{x},y)=\sum_{i=1}^{n}w_{i}S_{i}\left(\left(\mathbf{x}-y\mathbf{1}\right)^{2}\right)
\]
define regression operators, $F(\mathbf{x})$ (\citet{Yager2010}).
Consider the following results dependent on the weight vector $\Delta=(w_{1},...,w_{n})$.\end{example}
\begin{enumerate}
\item $\Delta=\mathbf{1}$ generates Least Squares regression and $F$
is monotonic and hence weakly monotonic;
\item $\Delta=(0,...,0,1)$ generates  Chebyshev regression and $F$ is
monotonic and hence weakly monotonic;
\item Since all the terms $S_{i}\left(\left(\mathbf{x}-y\mathbf{1}\right)^{2}\right)$ are constant under transformation $(\mathbf{x},y) \to (\mathbf{x}+a\mathbf 1,y+a)$ (cf Theorem \ref{thm:shift_invariant_penalty}), the OWA regression operators are  shift-invariant for any choice of the weight vector $\Delta$.
\item For $\Delta=\begin{cases}
(0,...,0_{k-1},\nicefrac{1}{2},\nicefrac{1}{2},0,...0) & n=2k\ is\ even\\
(0,...,0_{k-1},1,0,...,0) & n=2k-1\ is\ odd
\end{cases}$ then $F$ is the Least Median of Squares operator and hence shift
invariant and weakly monotonic; and
\item For $\Delta=(1,...,1_{h},0,...,0),h=\left\lfloor \frac{n}{2}\right\rfloor +1$
then $F$ is the Least Trimmed Squares operator and hence is shift
invariant and weakly monotonic.
\end{enumerate}

In the cases 3-5 the OWA regression operators are not monotonic.

\begin{example}
\textbf{Density based means:} The density based means were introduced in \citet{Yager2013}.
Let $d_{ij}$ denote the distance between inputs $x_i$ and $x_j$.
The \emph{density based mean} is defined as
\begin{equation}\label{eq:dbm}
y=\sum_{i=1}^n w_i(\mathbf x) x_i,
\end{equation}
where
\begin{equation}\label{eq:dens}
w_i(\mathbf x)=\frac{u_i(\mathbf x)}{\sum_{j=1}^n u_j(\mathbf x)} = \frac{K_C(\frac{1}{n}\sum_{j=1}^n d_{ij}^2)}{\sum_{k=1}^n K_C(\frac{1}{n}\sum_{j=1}^n d_{kj}^2)},
\end{equation}
and where $K_C$ is Cauchy kernel given by
\begin{equation}\label{eq:cauchy}
K_C(t) = (1+t)^{-1}.
\end{equation}

As shown in \citet{Beliakov2014_INS} density based means are shift-invariant and hence weakly monotonic. Extensions of the formulas \eqref{eq:dens}, \eqref{eq:cauchy} are also presented.
\end{example}

It may appear that the class of weakly monotonic averages consists mostly of shift-invariant functions, as the above examples illustrate. This impression is due to the fact that such examples came from robust regression, where the very definition of robust estimators of location involve shift-invariance \citet{Rousseeuw1987}. However, the class of weakly monotonic functions is reacher, as various (but not all) $\varphi-$transforms of shift-invariant functions (with non-linear $\varphi$) are weakly monotonic but not shift-invariant. Some results on the conditions on $\varphi$ which preserve weak monotonicity are presented in \citet{WilkinIPMU2014}. A few more examples are presented in the sequel.

\subsection{Mixture Functions}

The mixture functions were given by Eqn. \eqref{eq:mixture_function},
which we recall here for clarity
\[
M_{w}(\mathbf{x})=\frac{{\displaystyle \sum_{i=1}^{n}}w(x_{i})x_{i}}{{\displaystyle \sum_{i=1}^{n}}w(x_{i})}.
\]

\citet{Mesiar2008} have shown that under the constraint that $w$
is non-decreasing and differentiable, if $w(x)\ge w'(x)\cdot(b-x),x\in[a,b]=\mathbb{I}$,
then $M_{w}$ is an aggregation function and hence monotonic
(and by extension, also weakly monotonic). Additionally, $M_{w}$
is invariant to scaling of the weight functions (i.e., $M_{\alpha w}=M_{w}\ \forall\alpha\in\mathbb{R}\backslash\{0\}$
). In \citet{Mesiar2006}, it was shown that the dual, $M_{w}^{d}$,
of $M_{w}$ is generated by $w(1-x)$.

As mentioned in Section \ref{sec:preliminaries}, a special case of
the Gini means (with $p=1)$ are the Lehmer means, which are generally
not monotonic. Lehmer means are mixture functions with weight function
$w(t)=t^{q}$, which is neither increasing for all $q\in\mathbb{R}$
nor shift invariant. Note that for $q<0$ the value of Lehmer means at $\mathbf{x}$ with at least one component $x_i=0$  is defined as the limit when $x_i \to 0^+$, so that $L_q$ is continuous on $[0,\infty)^n$

 We begin by establishing some general
properties of Lehmer means.
\begin{lmm}
\label{thm:Lehmer_mean_properties}The Lehmer mean $L_{q}:[0,\infty)^{n}\rightarrow[0,\infty)$,
given by

\begin{equation}
L_{q}(\mathbf{x})=\frac{{\displaystyle \sum_{i=1}^{n}}x_{i}^{q+1}}{{\displaystyle \sum_{i=1}^{n}}x_{i}^{q}},\qquad q\in\mathbb{R}\label{eq:Lehmer_mean}
\end{equation}

is
\begin{enumerate}
\item homogeneous;
\item monotonic (and linear) along the rays emanating from the origin;
\item averaging;
\item idempotent;
\item not generally monotonic in $\mathbf{x}$;
\item has neutral element $0$ for $q>0$; and,
\item has absorbing element $0$ for $q<0$.\end{enumerate}
\end{lmm}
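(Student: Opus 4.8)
The plan is to verify the seven properties more or less independently, since most reduce to short computations and only one requires real care. I would open with homogeneity (1), which is immediate: substituting $a\mathbf{x}$ and factoring $a^{q+1}$ from the numerator and $a^{q}$ from the denominator gives $L_{q}(a\mathbf{x}) = a\,L_{q}(\mathbf{x})$. Property (2) then follows directly from (1): a ray from the origin has the form $\{t\mathbf{v} : t\ge 0\}$ with $\mathbf{v}\in[0,\infty)^{n}$ fixed, and homogeneity yields $L_{q}(t\mathbf{v}) = t\,L_{q}(\mathbf{v})$. Since the numerator and denominator of $L_{q}(\mathbf{v})$ are nonnegative, $L_{q}(\mathbf{v})\ge 0$, so $t\mapsto t\,L_{q}(\mathbf{v})$ is linear in $t$ with nonnegative slope, hence non-decreasing along the ray.

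For averaging (3), the key is to rewrite $L_{q}$ as a weighted arithmetic mean. Setting $w_{i} = x_{i}^{q}/\sum_{j}x_{j}^{q}$, we have $w_{i}\ge 0$, $\sum_{i}w_{i}=1$, and $L_{q}(\mathbf{x}) = \sum_{i}w_{i}x_{i}$, a convex combination of the $x_{i}$; therefore $\min(\mathbf{x})\le L_{q}(\mathbf{x})\le\max(\mathbf{x})$, with the case of a zero component handled by the continuity convention noted above. Idempotency (4) is then immediate, either as a special case of averaging (when $\min=\max=t$) or by direct substitution of $\mathbf{x}=(t,\ldots,t)$.

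Property (5), non-monotonicity, needs only a single counterexample, and the contra-harmonic mean $L_{1}(\mathbf{x}) = (x_{1}^{2}+x_{2}^{2})/(x_{1}+x_{2})$ is the natural choice: comparing $\mathbf{x}=(1,10)$ with $\mathbf{y}=(2,10)$ gives $\mathbf{x}\le\mathbf{y}$ yet $L_{1}(\mathbf{x}) = \tfrac{101}{11} > \tfrac{104}{12} = L_{1}(\mathbf{y})$. For the neutral element (6), with $q>0$ one has $0^{q}=0^{q+1}=0$, so a zero argument contributes nothing to either sum; setting a component to $0$ leaves $L_{q}$ unchanged, which is exactly the neutral-element property.

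The main obstacle is the absorbing element (7), as it is the only property requiring a limit rather than an algebraic identity. Here I would let one argument, say $x_{1}$, tend to $0^{+}$ with the remaining $x_{i}>0$ held fixed, and divide numerator and denominator by $x_{1}^{q}$ to obtain $L_{q} = (x_{1} + C_{1}/x_{1}^{q})/(1 + C_{2}/x_{1}^{q})$ with $C_{1}=\sum_{i\ge 2}x_{i}^{q+1}$ and $C_{2}=\sum_{i\ge 2}x_{i}^{q}$ finite. Since $q<0$ forces $x_{1}^{q}\to\infty$, both correction terms vanish and $L_{q}\to 0$. This is precisely where the convention from the preceding remark, that the value at a zero component is the $x_{i}\to 0^{+}$ limit making $L_{q}$ continuous on $[0,\infty)^{n}$, is indispensable; some care is also warranted because the isolated term $x_{1}^{q+1}$ behaves differently in the regimes $-1<q<0$, $q=-1$, and $q<-1$, even though the limit of the full quotient is $0$ in every case.
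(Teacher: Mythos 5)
Your proof is correct, and on two of the seven items it takes a genuinely different (and leaner) route than the paper. For property (2) the paper passes to generalised spherical coordinates and shows $L_{q}(\mathbf{x})=r\,f(\phi_{1},\dots,\phi_{n-1})$ with $f$ constant along a ray; you obtain the same conclusion directly from homogeneity by writing the ray as $t\mapsto t\mathbf{v}$ and noting $L_{q}(t\mathbf{v})=tL_{q}(\mathbf{v})$ with $L_{q}(\mathbf{v})\ge 0$ --- the coordinate change buys nothing here, so your version is preferable. For averaging (3) the paper factors out the largest element $a$ (resp.\ the smallest nonzero element $b$) and compares the resulting sums $\alpha\le\beta$ (resp.\ $\delta\le\gamma$); you instead exhibit $L_{q}$ as the convex combination $\sum_{i}w_{i}x_{i}$ with $w_{i}=x_{i}^{q}/\sum_{j}x_{j}^{q}$, which gives internality in one line and dovetails with the mixture-function representation the paper uses elsewhere. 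The one point to state explicitly is that for $q<0$ these weights are undefined when some $x_{i}=0$, so internality on the boundary follows by passing the non-strict inequalities $\min(\mathbf{x})\le L_{q}(\mathbf{x})\le\max(\mathbf{x})$ to the limit under the continuity convention --- you flag this, which suffices. Items (1), (4), (6), (7) match the paper's computations (your (6) is in fact slightly more general, treating $n$ arguments rather than the pair $(a,0)$), and for (5) your counterexample at $q=1$ is exactly as strong as the paper's, since for $q\in[-1,0]$ the Lehmer mean is actually monotonic and ``not generally monotonic'' can only be witnessed at particular values of $q$.
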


The proof is presented in the Appendix.

We now establish a sufficient condition for weak monotonicity of Lehmer
means, which depends on both $q$ and the number of arguments $n$.
We provide a relation between these two quantities.
\begin{thm}
\label{thm:Lehmer_mean_weak_monotonicity}The Lehmer mean of n arguments,
is weakly monotonic on $[0,\infty)^{n}$ if $n\le1+\left(\frac{q+1}{q-1}\right)^{q-1}$, $ q\in\mathbb{R} \setminus (0,1)$.
\end{thm}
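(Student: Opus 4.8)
The plan is to use the differentiable characterisation of weak monotonicity: since $L_q$ is smooth on the interior $(0,\infty)^n$ and continuous on $[0,\infty)^n$, it suffices to prove that the directional derivative $D_{\mathbf 1}L_q(\mathbf x)\ge 0$ at every interior point and then extend to the boundary by integrating along the ray $\mathbf x+s\mathbf 1$. Writing $P=\sum_i x_i^{q+1}$, $Q=\sum_i x_i^{q}$ and $R=\sum_i x_i^{q-1}$, a direct computation of $\sum_k \partial_{x_k}L_q$ gives
\[
D_{\mathbf 1}L_q(\mathbf x)=\frac{(q+1)Q^2-qPR}{Q^2}=(q+1)-q\,\frac{PR}{Q^2},
\]
so weak monotonicity is equivalent to the inequality $(q+1)Q^2\ge qPR$.

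First I would dispose of the range $q\le 0$. By Cauchy--Schwarz applied to the factorisation $x_i^{q}=x_i^{(q+1)/2}x_i^{(q-1)/2}$ one gets $Q^2\le PR$, i.e. $\tfrac{PR}{Q^2}\ge 1$. For $q<0$ this yields $D_{\mathbf 1}L_q=(q+1)+|q|\tfrac{PR}{Q^2}\ge (q+1)+|q|=1>0$, so $L_q$ is weakly monotone for every $n$; the case $q=0$ is the arithmetic mean and is monotone. Thus the substantive range is $q>1$, where $\big(\frac{q+1}{q-1}\big)^{q-1}$ is a well-defined positive number and the constraint on $n$ genuinely bites.

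For $q>1$ I would rewrite $(q+1)Q^2\ge qPR$ as $Q^2\ge q\,(PR-Q^2)$ and use the Lagrange-type identity
\[
PR-Q^2=\sum_{i<j}x_i^{q-1}x_j^{q-1}(x_i-x_j)^2,
\]
obtained by symmetrising $\sum_{i,j}\big(x_i^{q+1}x_j^{q-1}-x_i^qx_j^q\big)$. The heart of the argument is then a uniform per-pair bound: setting $r=\min(x_i,x_j)/\max(x_i,x_j)\in[0,1]$ one has $x_i^{q-1}x_j^{q-1}(x_i-x_j)^2=\max(x_i,x_j)^{2q}\,r^{q-1}(1-r)^2$, and maximising $r^{q-1}(1-r)^2$ over $[0,1]$ (the interior maximiser is $r^\ast=\frac{q-1}{q+1}$, where the derivative vanishes) gives
\[
x_i^{q-1}x_j^{q-1}(x_i-x_j)^2\le \frac{4}{(q+1)^2}\Big(\frac{q-1}{q+1}\Big)^{q-1}\big(x_i^{2q}+x_j^{2q}\big).
\]
Summing over pairs and using $\sum_{i<j}(x_i^{2q}+x_j^{2q})=(n-1)\sum_k x_k^{2q}\le (n-1)Q^2$, the inequality $Q^2\ge q(PR-Q^2)$ holds whenever $n-1\le \frac{(q+1)^2}{4q}\big(\frac{q+1}{q-1}\big)^{q-1}$. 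Since $(q+1)^2\ge 4q$, this condition is implied by the stated hypothesis $n\le 1+\big(\frac{q+1}{q-1}\big)^{q-1}$, which therefore suffices.

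The main obstacle is this uniform per-pair estimate: one must pin down the extremal ratio $r^\ast=\frac{q-1}{q+1}$ (equivalently the outlier ratio $\frac{q+1}{q-1}$ that appears in the bound) and handle the denominator $1+r^{2q}$ cleanly, which is where the slack factor $\frac{(q+1)^2}{4q}$ is introduced. I expect the resulting bound to be comfortably sufficient rather than sharp; a tighter but more laborious route would instead optimise $\frac{PR}{Q^2}$ directly, noting that the Lagrange conditions force extremisers to take at most two distinct coordinate values (each $x_i$ solving a fixed quadratic), the worst such configuration being a single outlier against $n-1$ equal coordinates and reducing to the one-variable inequality $(m+t^q)^2\ge qmt^{q-1}(t-1)^2$ with $m=n-1$. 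Finally I would record that the boundary is handled by continuity and that, for $q<0$, the convention that $0$ is absorbing keeps weak monotonicity valid on all of $[0,\infty)^n$.
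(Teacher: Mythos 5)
Your proof is correct, and it takes a genuinely different route from the paper's. The paper works through the penalty representation $\mathcal{P}(\mathbf{x},y)=\sum_i x_i^q(x_i-y)^2$ and implicit differentiation: it bounds each term $F_{x_i}=(q+1)x_i^q-qx_i^{q-1}y$ separately, anchoring the sum with the one coordinate (the largest for $q\ge 1$, the smallest for $q<-1$) whose partial derivative is provably non-negative, while letting the other $n-1$ coordinates sit at the worst-case value $x_i^{*}=\frac{q-1}{q+1}y$. You instead compute the directional derivative in closed form as $(q+1)-q\,PR/Q^2$ and split by the sign of $q$; the numerator $(q+1)Q^2-qPR$ is the same quantity the paper analyses, but your bounding strategy is different. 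Your Cauchy--Schwarz argument for $q\le 0$ actually proves more than the theorem asks: weak monotonicity for \emph{every} $n$ when $q$ is negative, a strengthening the paper only mentions as a conjecture in the remark following its proof. For $q>1$ your Lagrange-identity and per-pair optimisation isolate the same critical ratio $\frac{q-1}{q+1}$ the paper finds, but deliver the slightly better sufficient condition $n\le 1+\frac{(q+1)^2}{4q}\left(\frac{q+1}{q-1}\right)^{q-1}$, which, as you note, is implied by the stated hypothesis because $(q+1)^2\ge 4q$. Two small points to tidy in a final write-up: treat $q=1$ explicitly (the maximiser of $r^{q-1}(1-r)^2$ then sits at the boundary $r=0$ rather than at the interior critical point, and the bound degenerates to $n\le 2$, matching the paper's contra-harmonic corollary), and record that the continuity step extending the result from the open orthant to $[0,\infty)^n$ for $q<0$ relies on the paper's convention that $L_q$ is defined by limits so that $0$ is absorbing.
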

\begin{proof}
The Lehmer mean for $q\in[-1,0]$ is known to be monotonic (\citet{Farnsworth1986})
and hence weakly monotonic in that parameter range. In the range $q\in(0,1)$ the Lehmer mean is not weakly monotonic, because it's partial derivative at $\mathbf x=(a,b)$ when $a \to 0^+$ tends to $- \infty$.
Hence we focus
on the cases $q\geq 1$ and $q<-1$. The proof is easier to present in penalty-based representation, as the partial derivatives have more compact form.
As stated in Section \ref{thm:penalty_mixture_function},
$L_{q}(\mathbf{x})$ can be written as a penalty-based function (\ref{eq:penalty_based_function})
with penalty $\mathcal{P}(\mathbf{x},y)={\displaystyle \sum_{i=1}^{n}}x_{i}^{q}\left(x_{i}-y\right)^{2}$.
Differentiation w.r.t $y$ yields
\[
\mathcal{P}_{y}(\mathbf{x},y)=-2\sum_{i=1}^{n}\left(x_{i}^{q+1}-x_{i}^{q}y\right).
\]
At the minimum we have the implicit equation $\mathcal{P}_{y}=F(\mathbf{x},y)=0$,
with the necessary condition that yields $y=L_{q}(\mathbf{x})$.
We remind that for any $x_{i}=0$ the Lehmer mean is defined in the limit as $x_{i}\rightarrow0^{+}$.
The partial derivatives $\frac{\partial L_{q}(\mathbf{x})}{\partial x_{i}}$
are given by the implicit derivative $\frac{\partial y}{\partial x_{i}}=-\frac{F_{x_{i}}}{F_{y}}$,
with
\[
F(\mathbf{x},y)={\displaystyle \sum_{i=1}^{n}}x_{i}^{q+1}-y{\displaystyle \sum_{i=1}^{n}}x_{i}^{q}=0.
\]
By differentiation $F_{y}(\mathbf{x},y)=-{\displaystyle \sum_{i=1}^{n}}x_{i}^{q}\le0,\ \forall x_{i}\in[0,\infty)$
and thus the sign of the partial derivatives depends on the sign of
$F_{x_{i}}$, which is given by
\[
F_{x_{i}}(\mathbf{x},y)=(q+1)x_{i}^{q}-qx_{i}^{q-1}y.
\]
These derivatives can be either positive or negative. To establish
weak monotonicity we require that the directional derivative of $L_{q}(\mathbf{x})$
in the direction $(1,1,...,1)$ be non-negative. We have that $\left(\textrm{D}_{\mathbf{1}}L_{q}\right)(\mathbf{x})=\frac{1}{\sqrt{n}}\nabla L_{q}(\mathbf{x})\cdot\mathbf{1}=\frac{n}{\sqrt{n}F_{y}(\mathbf{x},y)}{\displaystyle \sum_{i=1}^{n}}F_{x_{i}}(\mathbf{x},y)$
and thus the sign of the directional derivative is determined only
by the sign of ${\displaystyle \sum_{i=1}^{n}}F_{x_{i}}(\mathbf{x},y)$.
We will henceforth work with the sorted inputs, $\mathbf{x}_{()}=\mathbf{x}_{\searrow}$
such that $\mathbf{x}_{(1)}$ is thus the largest input and $x_{(n)}$
the smallest.

Consider first the case: $q\geq 1$.

We examine the term $F_{x_{(1)}}$ and note that $y\le x_{(1)}$ for
any input $\mathbf{x}$ since $L_{q}(\mathbf{x})$ is averaging (condition
3 of Lemma \ref{thm:Lehmer_mean_properties}). Then it follows that
\begin{alignat*}{1}
F_{x_{(1)}} & =(q+1)x_{(1)}^{q}-qx_{(1)}^{q-1}y
  \ge(q+1)x_{(1)}^{q}-qx_{(1)}^{q-1}x_{(1)} =x_{(1)}^{q}\ge0.
\end{alignat*}
For the remaining $x_{i}$ we compute the smallest possible value
of $F_{x_{i}}$ by selecting the point of minimum value, which is
attained for
\[
\frac{\partial F_{x_{i}}}{\partial x_{i}}=q\left(q+1\right)x_{i}^{q-1}-q\left(q-1\right)x_{i}^{q-2}y=0.
\]
At the optimum either $x_{i}^{*}=0$ or
\begin{alignat*}{1}
q(q+1)\left(x_{i}^{*}\right)^{q-1}-q(q-1)\left(x_{i}^{*}\right)^{q-2}y & =0\\
\Rightarrow & x_{i}^{*}=\left(\frac{q-1}{q+1}\right)y \geq 0.
\end{alignat*}
At $x_{i}^{*}=0$ we have that $F_{x_{i}}=0$ (for $q>1$) and $F_{x_{i}}=-y$ (for $q=1$), and at $x_{i}^{*}=\left(\frac{q-1}{q+1}\right)y$
we have that
\begin{alignat*}{1}
F_{x_{i}}(x_{i}^{*}) & =(q+1)\left(\left(\frac{q-1}{q+1}\right)y\right)^{q}-q\left(\left(\frac{q-1}{q+1}\right)y\right)^{q-1}y\\
 & =(q-1)\left(\frac{q-1}{q+1}\right)^{q-1}y^{q}-q\left(\frac{q-1}{q+1}\right)^{q-1}y^{q}\\
 & =y^{q}\left(\frac{q-1}{q+1}\right)^{q-1}\left(q-1-q\right)\\
 & =-y^{q}\left(\frac{q-1}{q+1}\right)^{q-1}\\
 & \ge-x_{(1)}^{q}\left(\frac{q-1}{q+1}\right)^{q-1}.
\end{alignat*}
Since $(\textrm{D}_{\mathbf{1}}L_{q})(\mathbf{x})\varpropto{\displaystyle \sum_{i=1}^{n}}F_{x_{i}}$
then
\[
(\textrm{D}_{\mathbf{1}}L_{q})(\mathbf{x})=c\left(F_{x_{(1)}}+\sum_{i=2}^{n}F_{x_{(i)}}\right),
\]
and since each $F_{x_{(i)}}\ge-x_{(1)}^{q}\left(\frac{q-1}{q+1}\right)^{q-1}$
then

\begin{alignat*}{1}
(\textrm{D}_{\mathbf{1}}L_{q})(\mathbf{x}) & \ge c{\displaystyle \left(F_{x_{(1)}}+(n-1)\left(-x_{(1)}^{q}\left(\frac{q-1}{q+1}\right)^{q-1}\right)\right)}\\
 & ={\displaystyle c\left(x_{(1)}^{q}-(n-1)\left(\frac{q-1}{q+1}\right)^{q-1}x_{(1)}^{q}\right)}\\
 & =cx_{(1)}^{q}\left(1-(n-1)\left(\frac{q-1}{q+1}\right)^{q-1}\right).
\end{alignat*}
This expression is non-negative and hence $L_{q}(\mathbf{x})$ is
weakly monotonic provided that
\[
(n-1)\left(\frac{q-1}{q+1}\right)^{q-1}\le1\qquad\textrm{or}\qquad n\le1+\left(\frac{q+1}{q-1}\right)^{q-1}, q>1.
\]

For $q=1$ we get $n\leq 2$.

Now consider the case: $q<-1$.
We have that
\[
F_{x_{i}}=\frac{(1-p)x_{i}+py}{x_{i}^{p+1}},\qquad p=\left|q\right|>1
\]
and note that these derivatives are defined in the limit for the case
where $x_{i}=0$. I.e., $\left.F_{x_{i}}^{+}\right|_{x_{i}=0}={\displaystyle \lim_{x_{i}\rightarrow0^{+}}}F_{x_{i}}$.
We now examine the term $F_{x_{(n)}}$ and note that $y\ge x_{(n)}$
since $L_{q}(\mathbf{x})$ is averaging. Thus
\begin{alignat*}{1}
F_{x_{(n)}} & =\frac{(1-p)x_{(n)}+py}{x_{(n)}^{p+1}}\\
 & \ge\frac{(1-p)x_{(n)}+px_{(n)}}{x_{(n)}^{p+1}}
  =\frac{1}{x_{(n)}^{p}}.
\end{alignat*}
Again we consider the remaining $x_{i}$ by seeking the minimum of
$F_{x_{i}}$, given by
\[
\frac{\partial F_{x_{i}}}{\partial x_{i}}=-\frac{p(1-p)}{x_{i}^{p+1}}-\frac{p(p+1)}{x_{i}^{p+2}}y=0.
\]
This attains a minimum at $x_{i}=\left(\frac{p+1}{p-1}\right)y$ and
substitution into $F_{x_{i}}$ gives
\begin{alignat*}{1}
F_{x_{i}}\left(\frac{p+1}{p-1}y\right) & =\frac{(1-p)\left(\frac{p+1}{p-1}y\right)+py}{\left(\frac{p+1}{p-1}y\right)^{p+1}}\\
 & =\frac{-1}{y^{p}}\left(\frac{p+1}{p-1}\right)^{-(p+1)}\\
 & \ge\frac{-1}{x_{(n)}^{p}}\left(\frac{p+1}{p-1}\right)^{-(p+1)}.
\end{alignat*}
The directional derivative of $L_{q}(\mathbf{x})$ can be written
as
\begin{alignat*}{1}
(\textrm{D}_{\mathbf{1}}L_{q})(\mathbf{x}) & =c\left(F_{x_{(n)}}+\sum_{i=1}^{n-1}F_{x_{(i)}}\right)\\
 & \ge c\left(\frac{1}{x_{(n)}^{p}}-\frac{n-1}{x_{(n)}^{p}}\left(\frac{p+1}{p-1}\right)^{-(p+1)}\right)\\
 & =\frac{c}{x_{(n)}^{p}}\left(1-(n-1)\left(\frac{p+1}{p-1}\right)^{-(p+1)}\right).
\end{alignat*}
We note that the sign of this derivative does not change in the limit
as $x_{(n)}\rightarrow0^{+}$ and is non-negative for
\begin{alignat*}{1}
n & \le1+\left(\frac{p+1}{p-1}\right)^{p+1}\\
 & =1+\left(\frac{q-1}{q+1}\right)^{-q+1},\quad q=-p.
\end{alignat*}

Hence, in both cases $(q<-1,q\geq 1)$ we obtain the requirement for a
non-negative directional derivative - and hence weak monotonicity
of $L_{q}(\mathbf{x})$ - as being $n\le1+\left(\frac{q+1}{q-1}\right)^{q-1}$.
For the case $-1\le q\le0$ this remains a sufficient condition for
weak monotonicity, although clearly overly restrictive.
\end{proof}

\begin{rem} As suggested to us, as
$$
\left(\frac{q+1}{q-1}\right)^{q-1} = \left(\left(1+\frac{2}{q-1}\right)^{\frac{q-1}{2}}\right)^2,
$$
and the right hand side is increasing (with $q$) and approaches $e^2$ as $q \to \infty$, we have a restriction that for all $q>1$ weak monotonicity holds  for at most $n<9$ arguments. 

A further suggestion was to show that Lehmer means are weakly monotonic for any number of arguments for negative powers $q$. This can be achieved by examining the directional derivative $(\textrm{D}_{\mathbf{1}}L_{q})(\mathbf{x})$ directly, and in the near future we shall formalise this result.

So while Lehmer means are an interesting example of mixture functions (with familiar power functions as the weights) their usefulness in many applications would be limited, as they are weakly monotonic only for a restricted range of (positive) powers and the number of arguments.
\end{rem}

\begin{cor}
The contra-harmonic mean ($q=1$) is weakly monotonic only for two arguments.
\end{cor}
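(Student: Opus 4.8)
The plan is to read off weak monotonicity for $n=2$ directly from Theorem~\ref{thm:Lehmer_mean_weak_monotonicity}, and then to exhibit, for every $n\ge 3$, a point at which the directional derivative $\left(\textrm{D}_{\mathbf 1}L_1\right)$ is strictly negative, so that $n=2$ is the largest admissible number of arguments (the case $n=1$ being the trivial identity $L_1(x)=x$). For the sufficiency half, note that the closed-form bound $n\le 1+\left(\frac{q+1}{q-1}\right)^{q-1}$ is of indeterminate form at $q=1$, so I would instead invoke the dedicated $q=1$ computation carried out inside the proof of that theorem: there $F_{x_i}=2x_i-y$, whose minimum over $x_i\ge 0$ equals $-y$, and since averaging (Lemma~\ref{thm:Lehmer_mean_properties}) gives $y\le x_{(1)}$, the directional derivative is bounded below by a positive multiple of $x_{(1)}(2-n)$, which is non-negative exactly when $n\le 2$.

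For the converse I would work with the explicit sign of the directional derivative. With $q=1$ the partial derivatives collapse to $F_{x_i}=2x_i-y$, so, writing $S_1=\sum_i x_i$ and $S_2=\sum_i x_i^2$ and recalling $y=L_1(\mathbf x)=S_2/S_1$,
\[
\sum_{i=1}^{n}F_{x_i}=2S_1-n\,\frac{S_2}{S_1}=\frac{2S_1^2-nS_2}{S_1}.
\]
Since $F_y=-\sum_i x_i<0$, the quantity $\left(\textrm{D}_{\mathbf 1}L_1\right)(\mathbf x)$ is a positive multiple of $\sum_i F_{x_i}$, hence is strictly negative precisely when $nS_2>2S_1^2$.

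The key step is then to violate this inequality simultaneously for all $n\ge3$, which is achieved by concentrating the mass of $\mathbf x$ on a single coordinate. Taking $\mathbf x=(1,0,\ldots,0)\in[0,\infty)^n$ gives $S_1=S_2=1$, so $nS_2-2S_1^2=n-2>0$ whenever $n\ge3$; equivalently $L_1(1+a,a,\ldots,a)=\frac{1+2a+na^2}{1+na}$ has derivative $2-n<0$ at $a=0$, so $L_1(\mathbf x+a\mathbf 1)<L_1(\mathbf x)$ for small $a>0$. A strictly positive witness is available by the same concentration principle; for instance $\mathbf x=(10,1,1)$ yields $nS_2=306>288=2S_1^2$, and directly $L_1(10,1,1)=8.5$ while $L_1(10.1,1.1,1.1)=104.43/12.3\approx 8.49$. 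I do not expect a genuine obstacle in this argument; the only point demanding care is the indeterminate form of the theorem's bound at $q=1$, which forces one to appeal to the separate $q=1$ analysis rather than to the closed-form threshold.
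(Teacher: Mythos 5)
Your proposal is correct, and it is actually more complete than what the paper offers. For the sufficiency half ($n=2$) you follow exactly the paper's route: the closed-form threshold $1+\left(\frac{q+1}{q-1}\right)^{q-1}$ is indeterminate at $q=1$, and the paper resolves this inside the proof of Theorem \ref{thm:Lehmer_mean_weak_monotonicity} with the dedicated $q=1$ computation ($F_{x_i}=2x_i-y$, minimum $-y\ge -x_{(1)}$ at $x_i=0$, directional derivative bounded below by a positive multiple of $x_{(1)}(2-n)$), which is precisely the argument you invoke. Where you genuinely differ is on the word \emph{only}: the theorem states only a sufficient condition, and the paper asserts the corollary without ever exhibiting a failure of weak monotonicity for $n\ge 3$. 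Your converse fills that gap correctly. The identity $\sum_i F_{x_i}=(2S_1^2-nS_2)/S_1$ with $F_y=-S_1<0$ gives the exact sign criterion $nS_2>2S_1^2$ for a negative directional derivative, and the concentrated point $\mathbf{x}=(1,0,\ldots,0)$ (or its interior perturbation $(1,\epsilon,\ldots,\epsilon)$, or your numerical witness $(10,1,1)$) violates it for every $n\ge 3$; the elementary check $\frac{d}{da}L_1(1+a,a,\ldots,a)\big|_{a=0}=2-n<0$ confirms this without any implicit-function machinery. So your argument buys a genuine proof of the necessity direction that the paper leaves implicit, at the modest cost of one extra explicit computation; nothing in it fails.
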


\subsection{Spatial-Tonal Filters} \label{sec:image}

The well known class of spatial-tonal filters includes the mode filter
(\citet{Weijer2001}), bilateral filter (\citet{Tomasi1998}) and
anisotropic diffusion (\citet{Perona1990}) among others. This is
an important class of filters developed to preserve edges within images
when performing tasks such as filtering or smoothing. While these
filters are commonly expressed in integral notation over a continuous
space, they are implemented in discrete form over a finite set of
pixels that take on finite values in a closed interval. It can be
shown that the class of functions is given (in discrete form) by the
averaging function

\begin{equation}
F_{\Delta}^{g}(\mathbf{x};x_{1})=\frac{{\displaystyle \sum_{i=1}^{n}w_{i}g(\left|x_{i}-x_{1}\right|)x_{i}}}{{\displaystyle \sum_{i=1}^{n}}w_{i}g(\left|x_{i}-x_{1}\right|)},\label{eq:bilateral_filter}
\end{equation}
where the weights $w_{i}$ are nonlinear and non-convex functions
of the locations of the pixels, which have intensity $x_{i}$. In
all practical problems the locations are constant and hence can be
pre-computed to produce the constant weight vector $\Delta=(w_{1},w_{2},...,w_{n})$.
The pixel $x_{1}$ is the pixel to be filtered/smoothed such that
its new value is $\bar{x}_{1}=F_{\Delta}^{g}(\mathbf{x};x_{1})$.

The function $F_{\Delta}^{g}$ is nonlinear and not monotonic. It
is trivially shown to be expressed as a penalty-based function with
penalty
\[
\mathcal{P}(\mathbf{x},y)=\sum_{i=1}^{n}w_{i}g(\left|x_{i}-x_{1}\right|)(x_{i}-y)^{2}.
\]
In image filtering applications it is known that this penalty minimises
the mean squared error between the filtered image and the noisy source
image (\citet{Elad2002}). By Proposition \ref{thm:shift_invariant_penalty}
it follows directly that the filter $F_{\Delta}^{g}$ is shift invariant
and hence weakly monotonic. Furthermore, Theorem \ref{thm:shift_invariant_penalty}
permits us to generalise this class of filters to be those penalty
based averaging functions having penalty function

\begin{equation}
\mathcal{P}(\mathbf{x},y)=\sum_{i=1}^{n}w_{i}g(\left|x_{i}-f(\mathbf{x})\right|)(x_{i}-y)^{2}\label{eq:penalty_filter}
\end{equation}
or even further using other  bivariate function $D:\mathbb{I}^{2}\rightarrow\mathbb{R}$
(as discussed in Remark \ref{remark:penalty_dissimilarity_function})

\begin{equation}
\mathcal{P}(\mathbf{x},y)=\sum_{i=1}^{n}w_{i}g(\left|x_{i}-f(\mathbf{x})\right|)D(x_{i},y)\label{eq:penalty_filter-1}
\end{equation}

The implication of replacing $x_{1}$ with $f(\mathbf{x})$ in the
scaling function $g$ is that we may use any shift-invariant aggregation
of $\mathbf{x}$, which allows us to account for the possibility that
$x_{1}$ is itself an outlier within the local region of the image. For example, we could use the median, the mode or the shorth for $f(\mathbf x)$.
This provides an interesting result and invites further research in
the application of weakly monotonic means to spatial-tonal filtering
and smoothing problems.

\section{Conclusion}

\label{sec:conclusions}

In this article we have introduced the concept of weakly monotonic
averaging functions and examined some of the properties of these
functions. We have studied several families of means previously considered
to be simply non-monotonic, and shown them to be weakly monotonic.
Specifically we have established a sufficient condition for the weak
monotonicity of the Lehmer mean - which is an important subclass of
the Mean of Bajraktarevic - and shown that several important non-monotonic
regression operators are actually weakly monotonic. Additionally we
have proven that a large class of penalty-based  functions
are also weakly monotonic, which admits a very large class of aggregation
functions. This has permitted a simple proof that the class of image
processing filters known as spatial-tonal filters are weakly monotonic
averaging functions. This class subsumes the class of spatial averaging
filters, such as the Gaussian blur filter. Most importantly, given
the definition of weak monotonicity, all  aggregation functions
are weakly monotonic and thus we have not needed to redefine monotonic
aggregation in order to relate it to weakly monotonic averaging.

This study was prompted by two issues. First, that there exist several
important classes of means that fall outside of the current definition
of aggregation functions, which requires monotonicity in all arguments.
These include the examples presented in Section \ref{sec:wm_examples}:
the robust estimators of location (such as the mode and the shorth),
mixture functions and the spatial-tonal filters used extensively in
image processing. It appears reasonable to treat these functions within
the same framework that includes the monotonic means.

The second issue is that applications such as image processing and
robust statistics require non-monotone averaging, where the main concern
is that of noise (outliers) within the data. While the average must
be a representative value of the inputs, we wish to avoid the possibility
that one or more erroneous inputs drives the value of the output.
A small increase above the average may be reasonable, however a large
increase should permit that input to be discounted or ignored and
the average to possibly decrease.

The concept of weakly monotone aggregation addresses both of these
issues, bringing the existing (monotone) aggregation functions and
many of the non-monotone means into the same framework. Our proposal
then is to redefine the class of averaging aggregation functions to be not those
functions that are monotonic, but rather the class of weakly monotonic
functions that are averaging.

\smallskip{}

\textbf{Proposal:} \emph{}{A function $F:\mathbb{I}^{n}\rightarrow\mathbb{I}$
is an averaging aggregation function on $\mathbb{I}^{n}$ if and only if it
is weakly monotonic non-decreasing on $\mathbb{I}$ and averaging.}

\smallskip{}

It remains to be seen whether or not weak monotonicity is the minimal
requirement for defining averaging aggregation functions and whether
this weaker definition is justified for other types of aggregation,
such as conjunctive and disjunctive functions. Furthermore, are there
other possibilities for the relaxation of monotonicity that provide
for a unified framework of aggregation theory and practice? We leave
these questions for future work.

\section*{References\textmd{\normalsize \bibliographystyle{elsarticle-harv}
\bibliography{weakmon}
}}

\section*{Appendix}

Proof of Lemma \ref{thm:Lehmer_mean_properties}.

\begin{proof}
Consider each of the following:
\begin{enumerate}
\item \textbf{Homogeneous}: Set $\mathbf{x}=\lambda\mathbf{u}$ then
\[
L_{q}(\mathbf{x})=L_{q}(\lambda\mathbf{u})=\frac{{\displaystyle \sum_{i=1}^{n}}(\lambda u_{i})^{q+1}}{{\displaystyle \sum_{i=1}^{n}}(\lambda u_{i})^{q}}=\lambda\frac{{\displaystyle \sum_{i=1}^{n}}u_{i}^{q+1}}{{\displaystyle \sum_{i=1}^{n}}u_{i}^{q}}=\lambda L_{q}(\mathbf{u}).
\]
Hence $L_{q}$ is homogeneous with degree $1$.
\item \textbf{Monotonic (and linear) along the rays}: Consider the generalised
spherical coordinates (\citet{Blumenson60}) $(r,\theta,\phi_{1},...,\phi_{n-2})$,
$r\ge0$, $0\le\theta\le2\pi$, $0\le\phi_{i}\le\pi$ for the hypersphere
$S^{n-1}=\{\mathbf{x}\in\mathbb{R}^{n}:\left\Vert \mathbf{x}\right\Vert =r\}$.
We will restrict the angle variables so that $x_{i}\in[0,\infty)$.
The transformation to an orthonormal Euclidean basis $E_{n}$ produces
the vector $\mathbf{x}$ of length $r$ having components
\begin{alignat*}{1}
x_{1} & =r\cos(\phi_{1})\\
x_{j} & =r\cos\left(\phi_{j}\right){\displaystyle \prod_{k=1}^{j-1}\sin\left(\phi_{k}\right)},\quad j=2,...,n-1\\
x_{n} & =r\prod_{k=1}^{n-1}\sin\left(\phi_{k}\right)
\end{alignat*}
where $\phi_{n-1}=\theta$ and $0\le\phi_{i}\le\nicefrac{\pi}{2},\ i=1,...,n-1,\ r\ge0$.
The Lehmer mean of the Euclidean vector $\mathbf{x}$ is therefore
\begin{alignat*}{1}
L_{q}(\mathbf{x}) & =\frac{\left[r\cos(\phi_{1})\right]^{q+1}+{\displaystyle \sum_{j=2}^{n-1}}\left[r\cos(\phi_{j}){\displaystyle \prod_{k=1}^{j-1}}\sin(\phi_{k})\right]^{q+1}+\left[r{\displaystyle \prod_{k=1}^{n-1}}\sin(\phi_{k})\right]^{q+1}}{\left[r\cos(\phi_{1})\right]^{q}+{\displaystyle \sum_{j=2}^{n-1}}\left[r\cos(\phi_{j}){\displaystyle \prod_{k=1}^{j-1}}\sin(\phi_{k})\right]^{q}+\left[r{\displaystyle \prod_{k=1}^{n-1}}\sin(\phi_{k})\right]^{q}}\\
 & =r\left[\frac{\left[\cos(\phi_{1})\right]^{q+1}+{\displaystyle \sum_{j=2}^{n-1}}\left[\cos(\phi_{j}){\displaystyle \prod_{k=1}^{j-1}}\sin(\phi_{k})\right]^{q+1}+\left[{\displaystyle \prod_{k=1}^{n-1}}\sin(\phi_{k})\right]^{q+1}}{\left[\cos(\phi_{1})\right]^{q}+{\displaystyle \sum_{j=2}^{n-1}}\left[\cos(\phi_{j}){\displaystyle \prod_{k=1}^{j-1}}\sin(\phi_{k})\right]^{q}+\left[{\displaystyle \prod_{k=1}^{n-1}}\sin(\phi_{k})\right]^{q}}\right]\\
 & =rf(\phi_{1},...,\phi_{n-1})
\end{alignat*}
Along rays emanating from the origin each $f(\cdot)$ is constant
and hence $L_{q}(\mathbf{x})=\alpha_{\phi}r$ is linear.
\item \textbf{Averaging}: Let $\mathbf{x}_{\sigma}=\mathbf{x}_{\searrow}$
and take $a=x_{(1)}$ and $b=x_{(m)}$ denote the value of the largest
and the smallest non-zero elements of $\mathbf{x}$ respectively.
By homogeneity
\[
L_{q}(\mathbf{x})=a\frac{1+{\displaystyle \sum_{i=2}^{n}}\left(\frac{x_{(i)}}{a}\right)^{q+1}}{1+{\displaystyle \sum_{i=2}^{n}}\left(\frac{x_{(i)}}{a}\right)^{q}}=a\frac{1+\alpha}{1+\beta}
\]
Since $x_{(i)}\le a$ for all $i=1,...,n$ then $\alpha\le\beta$.
Hence $L_{q}(\mathbf{x})\le a$. Similarly,
\[
L_{q}(\mathbf{x})=b\frac{1+{\displaystyle \sum_{i=1}^{m-1}}\left(\frac{x_{(i)}}{b}\right)^{q+1}}{1+{\displaystyle \sum_{i=1}^{m-1}}\left(\frac{x_{(i)}}{b}\right)^{q}}=b\frac{1+\gamma}{1+\delta}
\]
Since $x_{(i)}\ge b$ for all $i=1,...,m-1$ and $x_{(i)}=0$ for
all $i=m+1,...,n$ then $\delta\le\gamma$. Hence $L_{q}(\mathbf{x})\ge b$.
Thus, $\min(\mathbf{x})\le L_{q}(\mathbf{x})\le\max(\mathbf{x})$
and $L_{q}(\mathbf{x})$ is averaging.
\item \textbf{Idempotent}: For any vector $\mathbf{x}=(t,t,...,t)$ we have
that
\[
L_{q}(\mathbf{x})=\frac{t{\displaystyle \sum_{i=1}^{n}}t^{q}}{{\displaystyle \sum_{i=1}^{n}}t^{q}}=t
\]
and hence $L_{q}$ is idempotent.
\item \textbf{Not generally monotonic in x}: Take $\mathbf{x}=(1,0)$ and
$\mathbf{y}=(1,\nicefrac{1}{2})$, then for $q>0$, $L_{q}(\mathbf{x})=1$
and $L_{q}(\mathbf{y})=\frac{1+\left(\nicefrac{1}{2}\right)^{q+1}}{1+\left(\nicefrac{1}{2}\right)^{q}}=
\frac{\left(2^{q+1}+1\right)}{\left(2^{q+1}+2\right)}<1$.
Thus $\mathbf{x}<\mathbf{y}$ and $L_{q}(\mathbf{x})>L_{q}(\mathbf{y})$,
hence $L_{q}(\mathbf{x})$ is not generally monotonic in \textbf{x}
for all $q\in\mathbb{R}$.
\item \textbf{Has neutral element of 0} \textbf{for $q>0$}: Consider $\mathbf{x}=(a,0)$
then $L_{q}(\mathbf{x})={\displaystyle \lim_{x_{2}\rightarrow0^{+}}}\frac{a^{q+1}+{\displaystyle x_{2}^{q+1}}}{a^{q}+x_{2}^{q}}=a$
for $q>0$.
\item \textbf{Has absorbing element of $0$ for $q<0$}: Consider $\mathbf{x}=(a,0)$
then $L_{q}(\mathbf{x})={\displaystyle \lim_{x_{2}\rightarrow0^{+}}L_{q}(1,x_{2})=}\lim_{x_{2}\rightarrow0^{+}}\frac{a+{\displaystyle x_{2}^{q+1}}}{a+x_{2}^{q}}=\frac{\frac{a}{x_{2}^{q}}+x_{2}}{\frac{a}{x_{2}^{q}}+1}=0$.
\end{enumerate}
\end{proof}

\end{document}